
\documentclass[letterpaper, 10 pt, conference]{ieeeconf}  

\IEEEoverridecommandlockouts                              

\overrideIEEEmargins                                      



\usepackage{graphicx}
\usepackage{subfig}
\usepackage{amssymb,amsmath,physics}
\usepackage{array,booktabs,arydshln}
\usepackage[cal=boondox,scr=boondoxo]{mathalfa}
\usepackage{algorithm}
\usepackage{algorithmic}
\usepackage[utf8]{inputenc}
\usepackage[english]{babel}
\usepackage{xcolor}
\definecolor{green}{RGB}{3,112,15}
\definecolor{yellow}{RGB}{255,140,0}
\usepackage{hyperref}
\usepackage{cite}
\usepackage{bm}

\setlength\floatsep{1\baselineskip plus 3pt minus 2pt}
\setlength\textfloatsep{1\baselineskip plus 3pt minus 2pt}
\setlength\intextsep{1\baselineskip plus 3pt minus 2 pt}

\newtheorem{definition}{\textbf{Definition}}[section]
\newtheorem{theorem}{\textbf{Theorem}}[section]

\newtheorem{proposition}[theorem]{\textbf{Proposition}}

\title{\Large \bf
Agile Robot Navigation through Hallucinated Learning and Sober Deployment
}

\author{Xuesu Xiao$^{1}$, Bo Liu$^{1}$, and Peter Stone$^{1, 2}$
\vspace{-50pt}
\thanks{$^{1}$Xuesu Xiao, Bo Liu, and Peter Stone are with Department of Computer Science, University of Texas at Austin, Austin, TX 78712 {\tt\scriptsize \{xiao, bliu, pstone\}@cs.utexas.edu}. 
$^{2}$Peter Stone is with Sony AI. 
This work has taken place in the Learning Agents Research
Group (LARG) at UT Austin.  LARG research is supported in part by NSF
(CPS-1739964, IIS-1724157, NRI-1925082), ONR (N00014-18-2243), FLI
(RFP2-000), ARO (W911NF-19-2-0333), DARPA, Lockheed Martin, GM, and
Bosch.  Peter Stone serves as the Executive Director of Sony AI
America and receives financial compensation for this work.  The terms
of this arrangement have been reviewed and approved by the University
of Texas at Austin in accordance with its policy on objectivity in
research.
}
}

\begin{document}

\maketitle
\thispagestyle{empty}
\pagestyle{empty}

\begin{abstract}
Learning from Hallucination (LfH) is a recent machine learning paradigm for autonomous navigation, which uses training data collected in completely safe environments and adds numerous imaginary obstacles to make the environment densely constrained, to learn navigation planners that produce feasible navigation even in highly constrained (more dangerous) spaces. 
However, LfH requires hallucinating the robot perception during deployment to match with the hallucinated training data, which creates a need for sometimes-infeasible prior knowledge and tends to generate very conservative planning. 
In this work, we propose a new LfH paradigm that does not require runtime hallucination---a feature we call ``sober deployment"---and can therefore adapt to more realistic navigation scenarios. This novel Hallucinated Learning and Sober Deployment (HLSD) paradigm is tested in a benchmark testbed of 300 simulated navigation environments with a wide range of difficulty levels, and in the real-world. In most cases, HLSD outperforms both the original LfH method and a classical navigation planner. 
\end{abstract}

\section{INTRODUCTION}
\label{sec::intro}

Machine learning techniques have been recently applied to mobile robot navigation to develop robots that are capable of moving from one point to another within obstacle-occupied environments in a collision-free manner~\cite{pfeiffer2017perception, chen2017socially, faust2018prm, siva2019robot, xiao2020appld, xiao2021toward, liu2021lifelong}. Besides classical planning methods~\cite{quinlan1993elastic, fox1997dynamic}, machine learning approaches can produce effective planners from data instead of hand-crafted rules and heuristics. 

Among the thrust of learning to navigate, Imitation Learning (IL)~\cite{pfeiffer2017perception} and Reinforcement Learning (RL)~\cite{faust2018prm} are the two main streams. While the former requires expert demonstration, the latter learns from trial-and-error. Their initial successes indicate a promising potential future for these data-driven approaches, which do not require sophisticated engineering and in-situ adjustment~\cite{xiao2017uav, xiao2020appld}. However, most learning approaches require a large amount of training data in order to produce good navigation behaviors, especially in challenging 
unseen environments. 

Learning from Hallucination (LfH)~\cite{xiao2021toward} is a recently proposed paradigm to address the difficulty of obtaining high-quality training data. Using LfH, the robot collects training data in an obstacle-free, and thus completely safe, environment with a random exploration policy. During training, the most constrained surrounding obstacle configuration is synthetically projected onto the robot's perception, which allows the effective action taken by the robot in the open space to be the only feasible, and therefore optimal, action. A control policy is learned with training data as if the robot had been moving in those constrained spaces. Thanks to the inherent safety of navigating in a completely open training environment, the robot can autonomously generate a large amount of training data with no human supervision or any costly failure during trial-and-error learning. 

However, one major drawback of LfH \cite{xiao2021toward} is that the perception also needs to be hallucinated during deployment, with the help of a fine-resolution reference path (prior knowledge that is sometimes infeasible to obtain), and it requires other modules to address out-of-distribution scenarios. This runtime hallucination adds extra computation to the perception and becomes ineffective when only sparse future waypoints are available. Furthermore, hallucinating to be always in the most constrained environment during deployment causes the planner to become unnecessarily conservative and fail to adapt to some realistic situations, e.g., an actual open space. 

The Hallucinated Learning and Sober Deployment (HLSD) approach proposed in this work eliminates the necessity of hallucination during deployment and allows the robot to perceive its actual surroundings. This ``sober deployment" relaxes the requirement for a high-resolution reference path and enables the robot to adapt to the real deployment environment. Through a novel hallucination strategy during training, the robot is able to learn from many obstacle configurations as augmentations sampled in addition to the \emph{minimal unreachable set}, which is the \emph{smallest} set of obstacles required to cause the actions performed in the obstacle-free training environment to be optimal, given a specific goal. Our simulated experiment on 300 benchmark testbeds \cite{perille2020benchmarking} and our real-world experiment using a physical robot show that after seeing an extensive amount of carefully designed hallucinated training data, the robot is able to efficiently produce agile maneuvers without runtime hallucination. Superior navigation performance is achieved compared to the original LfH approach with extra access to a high-resolution global path and runtime hallucination, and also to a classical navigation planner. 

\section{RELATED WORK}
\label{sec::related}
This section presents related work in mobile robot navigation, using classical motion planning  and recent machine learning techniques. 

\subsection{Classical Motion Planning}
In terms of classical motion planning, mobile robot navigation is the problem of moving a robot from one point to another within an obstacle-occupied space in a collision-free manner. Such planning usually happens in the robot Configuration Space (C-Space) \cite{latombe2012robot} and produces (asymptotically) optimal motion plans based on a predefined metric, such as maximum clearance, shortest path, or a combination thereof~\cite{lavalle2006planning}. These metrics are optimized in terms of C-space representation (e.g. cellular decomposition~\cite{lavalle2006planning}), global planning (e.g. Dijkstra's), and local planning. For example, the optimization-based Elastic Bands (E-Band) planner~\cite{quinlan1993elastic} uses repulsive forces generated from obstacles to deform and optimize an initial trajectory, primarily to achieve maximum clearance. The Dynamic Window Approach (DWA)~\cite{fox1997dynamic} samples feasible actions and scores them based on a weighted score of distance to obstacles, closeness to a global path, and progress toward the goal. The new hallucination strategy proposed in this paper assumes the planner learned from hallucinated training data primarily seeks the shortest path. We leave hallucination strategies that optimize for maximum clearance for future work.

\subsection{Machine Learning for Navigation}

Despite decades of effort to develop classical autonomous navigation systems \cite{quinlan1993elastic, fox1997dynamic}, machine learning has recently shown promise for creating competitive end-to-end planners with obstacle avoidance \cite{pfeiffer2017perception}, enabling terrain-based navigation \cite{siva2019robot, xiao2021learning}, allowing robots to move around humans \cite{chen2017socially}, and tuning parameters for classical navigation systems \cite{xiao2020appld, wang2021appli, xu2021applr}. All these learning methods require either extensive or high-quality training data, such as that derived from trial-and-error exploration or from human demonstrations~\cite{xiao2020motion}. 

To address the difficulty in obtaining extensive or high-quality training data, self-supervised LfH \cite{xiao2021toward} collects training data in an obstacle-free environment with complete safety, and then learns a local navigation policy through synthetically projecting the most constrained C-space ($C_{obst}^*$) onto the robot perception. The most constrained C-space corresponds to the obstacle configuration with as many obstacles as possible such that the executed motion is the \emph{only} feasible and therefore \emph{optimal} motion plan: if any more obstacles were to be added, then the motion plan would no longer be feasible. 
However, LfH also requires hallucination of $C_{obst}^*$ during deployment. This hallucination ``on-the-fly'' requirement assumes prior knowledge such as a relatively high-resolution global path is available. Furthermore, this previous work \cite{xiao2021toward} only works well when the linear velocity in the training set is mostly constant, due to the fact that the robot always hallucinates navigating in the most constrained scenarios and the trained local planner is therefore relatively conservative. Varying speeds in the same most constrained space will lead to ambiguity, as shown in our experiments (Section \ref{sec::experiments}). In this work, we eliminate the necessity of hallucination ``on-the-fly'', and therefore the requirement of an available high-resolution global path. During sober deployment, the local planner only needs a single local goal point, along with the real, rather than hallucinated, perception. 
In addition, the novel hallucination technique also teaches the robot to vary its speed in response to the real environment, instead of the hallucinated most constrained spaces.

\vspace{-5pt}
\section{APPROACH}
\label{sec::approach}

In this section, we present our Hallucinated Learning and Sober Deployment (HLSD) technique for mobile robot navigation. In Sec. \ref{sec::point_robot}, we start with a simplified example, which assumes the robot to be a point mass that follows a relatively simple path, to introduce the idea of a minimal unreachable set, $C_{obst}^{min}$, for a given optimal plan. We provide necessary conditions for being a minimal unreachable set, as the basis for many unreachable sets for the optimal plan. 
In Sec. \ref{sec::realistic_robot}, we show how we use one special case, $C_{obst}^{\overline{min}}$, to represent all other minimal unreachable sets. We also present how we adapt the point mass example to deal with real-world robots. In Sec. \ref{sec::sampling}, we introduce a sampling method to generate augmentations to the representative minimal unreachable set ($C_{obst}^{\overline{min}}$) to generalize for sober deployment. 

\subsection{Point Robot Example}
\label{sec::point_robot}
We first present a simplified example, which assumes a point mass robot going through three non-colinear configurations. In this case, the robot’s workspace is exactly the C-space. We use the same notation used by Xiao {\em et al}. to formalize LfH~\cite{xiao2021toward}: given a robot's C-space partitioned by unreachable (obstacle) and reachable (free) configurations, $C = C_{obst} \cup C_{free}$, 
the classical motion planning problem is to find a function $f(\cdot)$ that can be used to produce optimal plans $p=f(C_{obst}~|~c_c, c_g)$ that result in the robot moving from the robot's current configuration $c_c$ to a specified goal configuration $c_g$ without intersecting (the interior of) $C_{obst}$. Here, a plan $p \in \mathcal{P}$ is a sequence of low-level actions $\{u_i\}_{i=1}^{t}$ ($u_i \in \mathcal{U}$, $\mathcal{P}$ and $\mathcal{U}$ are the robot’s plan and action space, respectively). Considering the ``dual" problem of finding $f(\cdot)$, 
LfH~\cite{xiao2021toward} includes a method to find the (unique) \emph{most constrained} unreachable set corresponding to $p$, $C_{obst}^*$. In this work, however, we introduce the definition of a (not unique) \emph{minimal} unreachable set, $C_{obst}^{min}$, corresponding to $p$:  

\begin{definition}
$\mathcal{C}_{obst}^{min} \doteq \{C_{obst}^{min} \mid \forall c \in C_{obst}^{min}, 
f(C_{obst}^{min}\setminus\{c\}~|~c_c, c_g) \neq f(C_{obst}^{min}~|~c_c,c_g)\}$
\label{def::min}
\end{definition}
In other words, every member, $C_{obst}^{min}\in \mathcal{C}_{obst}^{min}$, is a minimal set of obstacles that lead to $p$ being an optimal plan.\footnote{$C_{obst}^{min}$ is minimal in the sense that no subset of it also leads to the same optimal plan, i.e. nothing can be removed such that $p$ remains optimal. } 
Any path that arises from an optimal plan can be approximated by connected line segments. The simplest case of a robot path following an optimal plan $p$ to move from $c_c$ to $c_g$ (except a straight line) is composed of configurations on two straight line segments, $c_c-c_m$ and $c_m-c_g$.
The intermediate turning point is defined as $c_m$. Since $c_m$ is one configuration on the robot's path, we say $p$ \emph{goes through} $c_m$ (Fig. \ref{fig::c_obst^min}). 
In the following, given a point-mass robot moving from $c_c$ to $c_g$ according to some optimal plan $p$ computed by $f(\cdot)$, 
we show two necessary conditions ($\Longrightarrow$) for an unreachable set ($C_{obst}$) to be a minimal unreachable set ($C_{obst}^{min}$),  to aid in identifying the representative unreachable set ($C_{obst}^{\overline{min}}$) used in Sec. \ref{sec::realistic_robot}. 


\begin{proposition}
If $c \in C_{obst}^{min}$, then the optimal plan for the unreachable set $C_{obst}^{min}\setminus \{c\}$, $\hat{p} = f(C_{obst}^{min}\setminus \{c\}~|~c_c, c_g)$, must go through $c$.
\label{prop::1}
\end{proposition}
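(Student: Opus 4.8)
The plan is to argue by contradiction, exploiting the fact that deleting an obstacle from $C_{obst}^{min}$ only relaxes the motion-planning problem. Write $p = f(C_{obst}^{min}\mid c_c,c_g)$ for the optimal plan that $C_{obst}^{min}$ induces and $\hat p = f(C_{obst}^{min}\setminus\{c\}\mid c_c,c_g)$. Since $C_{obst}^{min}\in\mathcal{C}_{obst}^{min}$ and $c$ is one of its members, Definition \ref{def::min} already tells us $\hat p\neq p$; the entire content of the proposition is to pin down \emph{why} they differ, namely that $\hat p$ takes advantage of the very region $c$ that has been removed (intuitively, $\hat p$ ``cuts the corner'' at the spot where $c$ used to sit).

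First I would record the monotonicity of the optimal cost under deletion of constraints: because $C_{obst}^{min}\setminus\{c\}\subseteq C_{obst}^{min}$, every plan avoiding $C_{obst}^{min}$ also avoids $C_{obst}^{min}\setminus\{c\}$, so $p$ is still feasible after the deletion and hence $\mathrm{cost}(\hat p)\le\mathrm{cost}(p)$. Next, suppose for contradiction that $\hat p$ does \emph{not} go through $c$. As the optimal, and in particular feasible, plan for $C_{obst}^{min}\setminus\{c\}$, $\hat p$ avoids every obstacle in $C_{obst}^{min}\setminus\{c\}$; together with the contradiction hypothesis, $\hat p$ therefore avoids all of $C_{obst}^{min}$, i.e. it is itself a feasible plan for the original problem. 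Optimality of $p$ for $C_{obst}^{min}$ then gives $\mathrm{cost}(p)\le\mathrm{cost}(\hat p)$, so the two costs coincide and $\hat p$ is \emph{also} an optimal plan for $C_{obst}^{min}$. Since in the point-robot, shortest-path setting of Sec. \ref{sec::point_robot} the optimal plan between $c_c$ and $c_g$ for a fixed obstacle set is unique (equivalently, $f(\cdot)$ is a well-defined deterministic map onto that unique optimum), we conclude $\hat p = f(C_{obst}^{min}\mid c_c,c_g) = p$, contradicting $\hat p\neq p$. Hence $\hat p$ must go through $c$.

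The step I expect to carry the real weight is the last one: upgrading ``$\hat p$ is an optimal plan for $C_{obst}^{min}$'' to ``$\hat p = p$.'' This is precisely where the standing assumptions of Sec. \ref{sec::point_robot} are needed — a point mass, non-colinear $c_c, c_m, c_g$, and a planner that seeks the shortest path — so that the optimum is genuinely unique and $f(\cdot)$ returns it. If one wished to drop uniqueness, the statement would have to be weakened (``some optimal plan for $C_{obst}^{min}\setminus\{c\}$ goes through $c$'') or $f$ equipped with a consistent tie-breaking rule; I would simply invoke the uniqueness already implicit in the paper's shortest-path formulation rather than build that machinery. Everything else — the relaxation/monotonicity observation and the bookkeeping ``avoids $C_{obst}^{min}\setminus\{c\}$ and avoids $c$, hence avoids $C_{obst}^{min}$'' — is routine.
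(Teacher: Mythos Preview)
Your proof is correct and follows essentially the same contradiction argument as the paper: assume $\hat p$ avoids $c$, observe that $\hat p$ is then feasible for the full $C_{obst}^{min}$, and derive a contradiction with the optimality of $p$. The only cosmetic difference is that the paper invokes uniqueness up front to assert $\hat p$ is \emph{strictly} shorter than $p$ and then contradicts optimality directly, whereas you first pin the costs equal and then invoke uniqueness to force $\hat p=p$; both routes rely on the same uniqueness assumption you correctly flagged.
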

\begin{proof}
Assume otherwise. Since $\hat{p} = f(C_{obst}^{min}\setminus\{c\}~|~c_c, c_g) \neq f(C_{obst}^{min}~|~c_c,c_g) = p$ (\textit{\textbf{Definition} \ref{def::min}}), for $C_{obst}^{min}\setminus\{c\}$, 
the path arising from $\hat{p}$ is shorter than the one arising from $p$. Since we assume $\hat{p}$ does not go through $c$, adding $c$ back to $C_{obst}^{min}\setminus\{c\}$ does not affect the feasibility of $\hat{p}$ for $C_{obst}^{min}$ and does not change the path length. The path arising from $\hat{p}$ is still shorter than the one arises from $p$ in $C_{obst}^{min}$, thus contradicting the optimality of $p$ for $C_{obst}^{min}$.
\end{proof}


\begin{proposition}
($\Longrightarrow$ 1) 
$\forall C_{obst} \in \mathcal{C}_{obst}^{min}, c_m\in C_{obst}$.\footnote{$C$ is a topological space, $C_{obst}$ is a closed set, and $C_{free}=C\setminus C_{obst}$ is an open set. $c_m$ is a boundary point of both $C_{obst}$ and $C_{free}$. The robot can come arbitrarily close to the obstacles while remaining in $C_{free}$~\cite{lavalle2006planning}.}
\label{prop::c_m}
\end{proposition}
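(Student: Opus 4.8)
The plan is to argue by contradiction, exploiting that an optimal plan here is a shortest one (the same sense of optimality already used in the proof of \textbf{Proposition} \ref{prop::1}). Suppose some minimal unreachable set $C_{obst} \in \mathcal{C}_{obst}^{min}$ does not contain $c_m$. Since $C_{obst}$ is closed, $C_{free} = C \setminus C_{obst}$ is open, so $c_m \in C_{free}$ would yield an open neighborhood $U$ of $c_m$ with $U \subseteq C_{free}$.

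Next I would cut the corner of the path at $c_m$. Because $c_c$, $c_m$, $c_g$ are non-colinear, the path arising from $p$ genuinely turns at $c_m$; choose a point $a$ in the interior of the segment from $c_c$ to $c_m$ and a point $b$ in the interior of the segment from $c_m$ to $c_g$, both lying in $U$, and replace the portion $a$--$c_m$--$b$ of the path by the straight chord $a$--$b$. The triangle inequality, strict by non-colinearity, makes the new path strictly shorter; the chord lies in $U \subseteq C_{free}$ and hence does not intersect the interior of $C_{obst}$, while the remainder of the path is untouched and therefore still feasible. This produces a feasible path for $C_{obst}$ strictly shorter than the one arising from $p$, contradicting the optimality of $p = f(C_{obst}~|~c_c,c_g)$. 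Hence $c_m \notin C_{free}$.

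Finally, since the path arising from $p$ passes through $c_m$ and never enters the interior of $C_{obst}$, the point $c_m$ cannot be interior to $C_{obst}$ either; the only remaining possibility is $c_m \in \partial C_{obst} \subseteq C_{obst}$ (with $C_{obst}$ closed), which is exactly the claimed condition and agrees with the footnote's reading of $c_m$ as a boundary point the robot can approach arbitrarily closely. Note that this argument nowhere uses minimality of $C_{obst}$: it shows $c_m$ belongs to \emph{every} unreachable set for which $p$ is optimal, so the minimal case follows a fortiori.

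I expect the only real difficulty to be bookkeeping rather than mathematical depth: verifying that $a$ and $b$ can be taken strictly on the two segments and strictly inside the free neighborhood $U$ at the same time, and being explicit that ``optimal'' means ``shortest-length feasible''. One minor case worth flagging is the degenerate straight-line path from $c_c$ to $c_g$ (no turn), where $c_m$ would be an interior point of the segment; since the proposition is phrased for the two-segment case, it suffices to remark that the same corner-cutting reasoning (or a direct appeal to \textbf{Proposition} \ref{prop::1}) handles it.
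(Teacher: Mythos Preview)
Your proof is correct and follows essentially the same approach as the paper: both argue by contradiction via a corner-cutting shortcut near $c_m$ (replacing the kink $a$--$c_m$--$b$ by the chord $a$--$b$) and appeal to the strict triangle inequality plus closedness of $C_{obst}$. The paper phrases the neighborhood step as a limit over shrinking balls $\mathcal{B}(c_m,\epsilon)$ rather than directly invoking openness of $C_{free}$, but the underlying idea is identical.
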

\begin{proof}
Consider any circle $\mathcal{B}(c_m, \epsilon)$ that centers at $c_m$ with radius $\epsilon$. $\mathcal{B}$ intersects $c_c-c_m$ at $c_1$ and $c_m-c_g$ at $c_2$ (Fig. \ref{fig::c_obst^min} a). Assume there exists no configuration $c \in \mathcal{B}$ such that $\exists C_{obst} \in \mathcal{C}_{obst}^{min}$ and $c \in C_{obst}$. Consider the path $c_c-c_1-c_2-c_g$: since $c_c-c_m-c_g$ is feasible, then $c_c-c_1$ and $c_2-c_g$ are feasible. Moreover, by assumption $c_1-c_2$ is also feasible. But $c_c-c_1-c_2-c_g$ is shorter than $c_c-c_m-c_g$ since $c_1-c_2$ is shorter than $c_1-c_m-c_2$ due to triangle inequality. This contradicts the optimality of $c_c-c_m-c_g$. 
Therefore, $\exists c\in \mathcal{B}$ that belongs to some $C_{obst} \in \mathcal{C}_{obst}^{min}$. Since this is true for $\lim_{\epsilon \rightarrow 0}\mathcal{B}(c_m, \epsilon)$, and $C_{obst}$ is a closed set~\cite{lavalle2006planning}, the limit point $\mathcal{B}(c_m, 0) = c_m \in C_{obst}$. 
\end{proof}

We name the union of all robot configurations in the triangle defined by $c_c$, $c_m$, and $c_g$ as $G_1$. On the other side of line segment $c_c-c_g$, we name the union of all configurations in the half ellipse, whose focal points locate at $c_c$ and $c_g$, and whose major axis length is $|c_c c_m|+|c_mc_g|$, as $G_2$. We define $G = G_1 \cup G_2$ (the grey area in Fig. \ref{fig::c_obst^min}). 


\begin{proposition}
($\Longrightarrow$ 2) 
$\forall C_{obst} \in \mathcal{C}_{obst}^{min}$, $\forall c \in G$, $\{c_p~|~c_p \text{ on line segments } c_c-c-c_g\}\cap C_{obst} \neq \emptyset$ 
\label{prop::iff}
\end{proposition}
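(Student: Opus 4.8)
The plan is to argue by contradiction. Fix $C_{obst}\in\mathcal{C}_{obst}^{min}$ and $c\in G$; by the setup, $p=f(C_{obst}~|~c_c,c_g)$ is an optimal (shortest) plan whose path is the broken line $c_c-c_m-c_g$ of length $L:=|c_c c_m|+|c_m c_g|$. Assume that the two‑segment path $c_c-c-c_g$ avoids $C_{obst}$ entirely. I will show this path is feasible and of length at most $L$, and that after a small perturbation it becomes \emph{strictly} shorter than $L$, contradicting the optimality of $p$ for $C_{obst}$; the single degenerate case $c=c_m$ is dispatched directly by \textbf{Proposition}~\ref{prop::c_m}.

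The geometric core is the claim that every $c\in G$ satisfies $|c_c c|+|c c_g|\le L$. For $c\in G_2$ this is immediate from the definition of $G_2$: the half ellipse with foci $c_c,c_g$ and major axis $L$ is exactly the set of points whose distances to $c_c$ and $c_g$ sum to at most $L$. For $c\in G_1$ (the triangle $c_c c_m c_g$) I would draw the ray from $c_c$ through $c$, let $c'$ be its intersection with the opposite side $c_m-c_g$, and chain two triangle inequalities: $|c_c c|+|c c_g|\le |c_c c|+|c c'|+|c' c_g| = |c_c c'|+|c' c_g|\le \big(|c_c c_m|+|c_m c'|\big)+|c' c_g| = |c_c c_m|+|c_m c_g| = L$, using $|c_m c'|+|c' c_g|=|c_m c_g|$ because $c'$ lies on $c_m-c_g$. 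Tracing the equality conditions and using that $c_c,c_m,c_g$ are non‑colinear shows equality forces $c=c_m$, so for $c\in G_1\setminus\{c_m\}$ the inequality is strict.

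Now complete the argument. If $c=c_m$, then $c_m$ lies on the path $c_c-c-c_g$, so the avoidance assumption gives $c_m\notin C_{obst}$, contradicting \textbf{Proposition}~\ref{prop::c_m}; hence take $c\neq c_m$. Since $C_{obst}$ is closed and the path $c_c-c-c_g$ is compact and disjoint from it, their distance is some $\delta>0$; in particular the path misses the interior of $C_{obst}$ and so is a feasible path, of length $\le L$ by the previous paragraph. If its length is $<L$, it is a feasible path strictly shorter than the path of $p$, contradicting the optimality of $p$ for $C_{obst}$. The only remaining case is length exactly $L$, which by the equality analysis can happen only when $c$ lies on the elliptical arc of $G_2$; then I would nudge $c$ slightly toward the segment $c_c-c_g$ to a point $\tilde c$ with $|c_c\tilde c|+|\tilde c c_g|<L$ such that $c_c-\tilde c-c_g$ stays within $\delta/2$ of the original path (by continuity of both quantities in $\tilde c$), hence remains disjoint from $C_{obst}$, feasible, and now strictly shorter than $L$ — the same contradiction. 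The routine parts are the two triangle inequalities and the perturbation estimate; the one genuinely delicate point is this boundary case $|c_c c|+|c c_g|=L$, where a ``strictly shorter path'' is not available directly, and I expect it to be the main obstacle — but it is resolved cleanly because disjointness from the \emph{closed} set $C_{obst}$ yields a positive clearance $\delta$ that survives a small perturbation of $c$.
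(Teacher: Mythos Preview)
Your argument is correct and follows the same contradiction strategy as the paper: assume the two-segment path $c_c-c-c_g$ misses $C_{obst}$, observe it is feasible and no longer than $L=|c_c c_m|+|c_m c_g|$, and contradict the optimality of $p$. The paper's version is terser: it simply remarks that every $c\in G$ lies in the closed ellipse with foci $c_c,c_g$ and major axis $L$ (this already covers $G_1$, since the triangle $c_c c_m c_g$ sits inside the convex ellipse), declares the resulting path ``shorter,'' and stops --- glossing over the boundary case $|c_c c|+|c c_g|=L$. Your separate triangle-inequality treatment of $G_1$ is valid but could be shortened by this ellipse-containment observation; on the other hand, your perturbation argument for the equality case and your invocation of \textbf{Proposition}~\ref{prop::c_m} for $c=c_m$ are genuine refinements that make rigorous what the paper leaves implicit.
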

\begin{proof}
Assume $\exists c \in G$, $\{c_p~|~c_p \text{ on line segments } c_c-c-c_g\}\cap C_{obst} = \emptyset$. Then $c_c-c-c_g$ is a feasible path. The length of $c_c-c-c_g$ is the sum of distances from $c$ (inside ellipse) to the two ellipse focal points $c_c$ and $c_g$, which is, per definition of an ellipse, shorter than its major axis length $|c_c c_m|+|c_mc_g|$. This contradicts the optimality of $p$. 
\end{proof}

Based on the two necessary conditions of $C_{obst} \in \mathcal{C}_{obst}^{min}$ (\textit{\textbf{Proposition} \ref{prop::c_m}} and \textit{\ref{prop::iff}}), one class of minimal unreachable sets could be simply constructed by connecting $c_m$ with some point $c_e$ on the left boundary of the ellipse with a straight line $c_m-c_e$ ($C_{obst}^{min_1}$ in Fig. \ref{fig::c_obst^min} a), or two line segments $c_m-c_g(c_c)$ and $c_g(c_c)-c_e$ ($C_{obst}^{min_2}$ in Fig. \ref{fig::c_obst^min} b), 
if not all configurations on $c_m-c_e$ are in $G$. In particular, for efficiency, we simply represent all minimal unreachable sets with a special case, $C_{obst}^{\overline{min}}$ (Fig. \ref{fig::c_obst^min} d), which is all configurations along the straight line $c_m$ and $c_m'$ (the reflective symmetry point of $c_m$ with respect to $c_c-c_g$). Empirical evidence of this approximation's sufficiency for the purpose of learning will be provided in Sec. \ref{sec::experiments}. Here, we further provide one more observation to help develop intuition 
regarding how to identify a $C_{obst}^{min}$.

\begin{proposition}
$\forall C_{obst}^{min}$, $\forall c \in C_{obst}^{min}$, $c \in G$. 
\label{prop::outside}
\end{proposition}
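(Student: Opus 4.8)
The plan is to argue by contradiction, essentially running the "shortcut" idea of Proposition~\ref{prop::iff} in reverse: if some obstacle configuration lies outside $G$, it can be removed without helping to constrain $p$ to optimality, which contradicts minimality. So suppose $C_{obst}^{min}$ is a minimal unreachable set and $c \in C_{obst}^{min}$ with $c \notin G$. By Proposition~\ref{prop::1}, the optimal plan $\hat p = f(C_{obst}^{min}\setminus\{c\} \mid c_c, c_g)$ must go through $c$; that is, the shortest feasible path for the reduced obstacle set actually passes through the point $c$. The goal is to show this is impossible when $c \notin G$, because any path through $c$ is strictly longer than the path arising from $p$ (namely $c_c$–$c_m$–$c_g$), which is still feasible for $C_{obst}^{min}\setminus\{c\}$, so $\hat p$ cannot be optimal.

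First I would pin down the geometry: $G = G_1 \cup G_2$, where $G_1$ is the (filled) triangle $c_c c_m c_g$ and $G_2$ is the half-ellipse on the opposite side of the chord $c_c$–$c_g$ with foci $c_c, c_g$ and major-axis length $L \doteq |c_c c_m| + |c_m c_g|$. The key metric fact is that $G_2$ (together with its boundary) is exactly the set of points $q$ on that side of the chord with $|c_c q| + |q c_g| \le L$. Hence for any $c$ strictly outside $G_2$ on that side, $|c_c c| + |c c_g| > L$, so the two-segment path $c_c$–$c$–$c_g$ is strictly longer than $L$. I would then note that the path of $\hat p$, being required to pass through $c$, has length at least $|c_c c| + |c c_g|$ by the triangle inequality (any path from $c_c$ to $c_g$ through $c$ is at least as long as the straight segments $c_c c$ and $c c_g$). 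Therefore $\operatorname{len}(\hat p) \ge |c_c c| + |c c_g| > L = \operatorname{len}(p)$. Since removing obstacles cannot make the originally feasible path $c_c$–$c_m$–$c_g$ infeasible, $p$ remains feasible for $C_{obst}^{min}\setminus\{c\}$, contradicting optimality of $\hat p$. This disposes of the case $c$ on the $G_2$ side.

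The remaining case is $c$ on the same side of the chord as $c_m$, i.e.\ $c \notin G_1$. Here I would use that $G_1$ is the triangle bounded by the three segments $c_c$–$c_m$, $c_m$–$c_g$, and $c_c$–$c_g$, all of which are feasible for $C_{obst}^{min}$ (the first two because $p$ traverses them, the chord because it is even shorter and $p$ is optimal), hence also feasible for $C_{obst}^{min}\setminus\{c\}$. A point $c$ strictly outside this closed triangle on the $c_m$ side must be separated from it by one of the three supporting lines; in every sub-case a short argument shows that a path through $c$ must ``leave and re-enter'' and is strictly longer than going directly along the triangle's boundary from $c_c$ to $c_g$ (which has length $\le L$), again contradicting optimality of $\hat p$. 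I would handle this cleanly by observing that the convex region $G_1$ has feasible boundary, so the geodesic in the reduced space from $c_c$ to $c_g$ that is forced through an exterior point $c$ is strictly longer than $L$.

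The main obstacle I anticipate is the second case: being careful about points that lie outside $G_1$ but whose straight-line connections to $c_c$ and $c_g$ might still be short — one has to genuinely use the feasibility of the whole triangle boundary (not just the two legs $c_c c_m$, $c_m c_g$) and possibly the optimality of $p$ to rule out shortcuts, rather than relying on a single two-segment estimate as in the $G_2$ case. A secondary subtlety is the boundary/closure bookkeeping: $C_{obst}$ is closed and $c_m$ itself lies on it (Proposition~\ref{prop::c_m}), so one should phrase the claim as ``$c \in G$'' with $G$ including its boundary, and make sure the strict inequalities used to derive the contradiction survive the closure argument (as in the proof of Proposition~\ref{prop::c_m}, take a shrinking ball around any putative exterior point and pass to the limit if needed).
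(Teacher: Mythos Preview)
Your overall strategy---contradiction via Proposition~\ref{prop::1}, then a geometric length comparison---matches the paper's proof, and your first case (points on the $G_2$ side, outside the half-ellipse) is handled exactly as the paper handles points outside the full ellipse. The gap is in your second case.

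First, a concrete error: your claim that the chord $c_c$--$c_g$ is feasible for $C_{obst}^{min}$ is false. If it were feasible, the straight segment would be strictly shorter than $c_c$--$c_m$--$c_g$, contradicting the optimality of $p$; indeed Proposition~\ref{prop::iff} says obstacles must block every two-segment shortcut through $G$, the chord included. So the ``feasible triangle boundary'' premise on which your convexity sketch rests is invalid.

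Second, and this is the real obstruction you anticipated, for a point $c$ on the $c_m$ side that lies \emph{inside} the ellipse but outside the triangle $G_1$, the two-segment lower bound gives only $|c_c c|+|c\,c_g|\le L$, not $>L$. The triangle-inequality estimate that worked in your first case therefore yields no contradiction here; something extra is needed.

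The paper supplies that extra ingredient. Because $c$ is on the $c_m$ side but outside $G_1$, the broken segment $c_c$--$c$--$c_g$ must cross one of the two legs $c_c$--$c_m$ or $c_m$--$c_g$ at some point $c_i$. Substructure optimality of $p$ (a sub-path of a shortest path is itself shortest) then forces the shortest feasible route between $c_i$ and the far endpoint to be along $c_c$--$c_m$--$c_g$; detouring through $c$ strictly lengthens it beyond $L$, contradicting the optimality of $\hat p$. This intersection-plus-substructure argument is exactly the missing step in your second case and replaces the incorrect chord-feasibility assumption.
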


\begin{proof}
Assume $c \in C_{obst}^{min}$ and $c \notin G$, only two possibilities exist: 


(1) $c$ is outside the entire (left and right half) ellipse whose focal points locate at $c_c$ and $c_g$, and whose major axis length is $|c_c c_m|+|c_mc_g|$: 
based on \textit{\textbf{Proposition} \ref{prop::1}}, the optimal plan $\hat{p} = f(C_{obst}^{min}\setminus \{c\}~|~c_c, c_g)$ must go through $c$. The shortest possible path, which goes through $c$, is $c_c-c-c_g$, if it exists. However, based on the definition of ellipse, for any $c$ outside the ellipse, $|c_cc|+|cc_g|>|c_cc_m|+|c_mc_g|$. This contradicts the optimality of $\hat{p}$. 

(2) $c$ is inside the entire ellipse, but outside $G$: $c$ must be in the right half of the ellipse but outside $G_1$ (Fig. \ref{fig::c_obst^min} e). Again, $\hat{p}$ must go through $c$. The shortest possible path which goes through $c$ is $c_c-c-c_g$, if it exists, which must intersect either $c_c-c_m$ or $c_m-c_g$ at some point $c_i$. Due to substructure optimality (i.e. a sub-path of a shortest path is still a shortest path), the shortest path between any points on $c_c-c_m-c_g$ must be its sub-path. 
If $c_i$ is on $c_c-c_m$, then the shortest path from $c_i$ to $c_g$ must be $c_i-c_m-c_g$. $c_c-c-c_i-c_m-c_g$ is longer than $c_c-c_m-c_g$, and therefore not optimal. If $c_i$ is on $c_m-c_g$, the shortest path between $c_c$ and $c_i$ must be $c_c-c_m-c_i$. $c_c-c_m-c_i-c-c_g$ is longer than $c_c-c_m-c_g$, and therefore not optimal. In both cases, the shortest possible path going through $c$ is longer than $c_c-c_m-c_g$. This contradicts the optimality of $\hat{p}$. 

Therefore, $\forall C_{obst}^{min}$, $\forall c \in C_{obst}^{min}$, $c \in G$. 
\end{proof}

\begin{figure}
  \centering
  \includegraphics[width=\columnwidth]{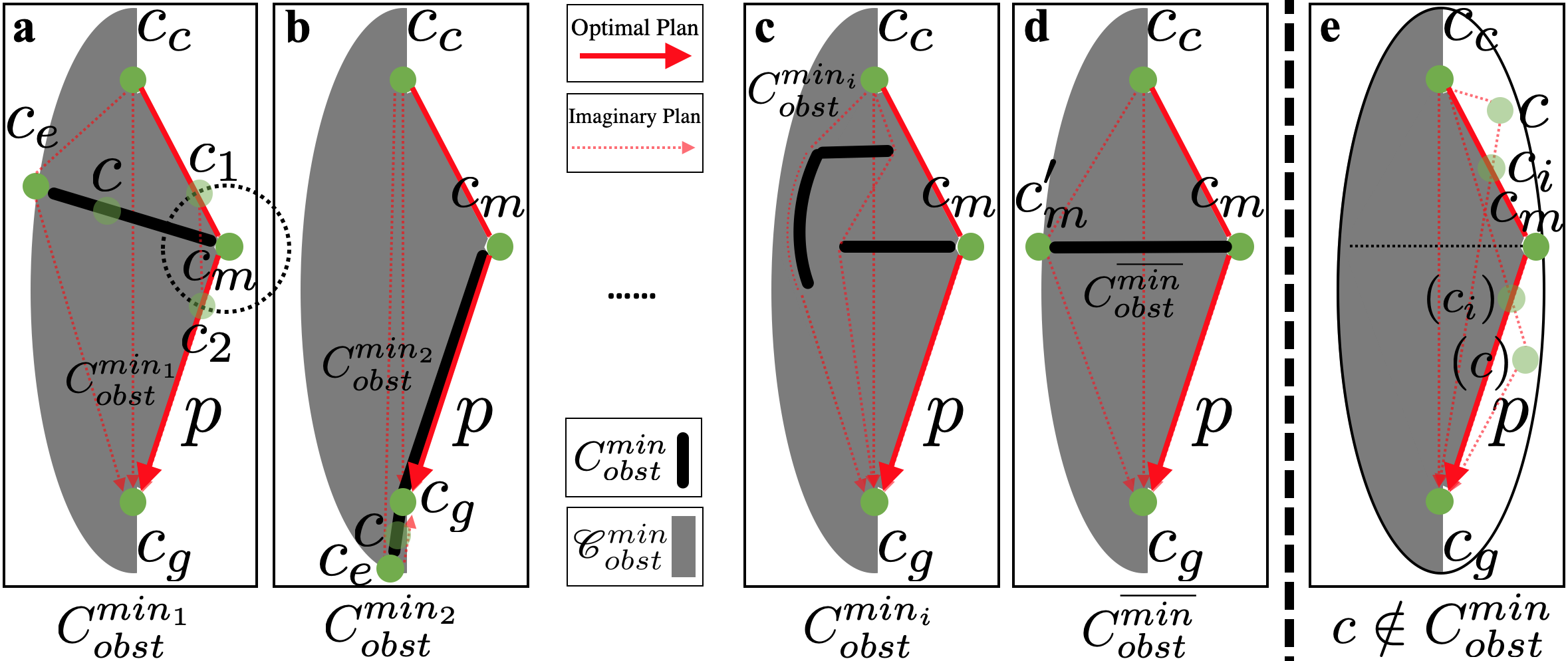}
  \caption{
  $C_{obst}^{min_1},~C_{obst}^{min_2},~...~,~C_{obst}^{min_i},~C_{obst}^{\overline{min}} \in \mathcal{C}_{obst}^{min}$. 
}
  \label{fig::c_obst^min}
  \vspace{-10pt}
\end{figure}

\subsection{Realistic Nonholonomic Robot}
\label{sec::realistic_robot}
Based on the propositions discussed in Sec. \ref{sec::point_robot}, we present the hallucination technique for a realistic robot. In realistic scenarios, we approximate all $C_{obst}^{min}$ in $\mathcal{C}_{obst}^{min}$ with $C_{obst}^{\overline{min}}$. 
We hypothesize that all $C_{obst}^{min}$ are sufficiently similar that using just one leads to learning that is just as good as if we used them all, especially when (1) $c_c$, $c_m$, and $c_g$ extracted from realistic trajectories are close to each other, and (2) $C_{obst}$ is instantiated in terms of discrete LiDAR beams. Empirical evidence of this sufficiency will be provided in Sec. \ref{sec::experiments}. 
However, a realistic robot cannot be modeled as a simple point mass because: (1) the size is not negligible and we need a path for the \emph{center} of the robot that causes no part of the robot to collide with an obstacle; (2) nonholonomic robots cannot change motion direction instantly, so we need to generalize to a \emph{continuously} turning path from the \emph{piece-wise} point mass example. 
In Sec. \ref{sec::realistic_robot}, we aim to adapt the point mass example (Sec. \ref{sec::point_robot}) to real-world robots, and therefore need to address these differences. 

To address (1), we define one point to the left and another to the right of the centroid of the robot, offset by the robot width, as footprint points, as shown in Fig. \ref{fig::real_robot}. The instantaneous linear velocity along the line between these two footprint points is zero for nonholonomic vehicles. The polygon defined by a sequence of footprint points must belong to free space. The actual path executed by the optimal plan $p$ follows the middle of this area. To address (2), we define $c_{\omega}$ as configurations between the current and goal configurations with a non-zero angular velocity ($\omega \neq0$). Given the current configuration, each $c_\omega$, and each $c_\omega$'s next configuration, their left or right footprint points are treated as $c_c$, $c_m$, and $c_g$ in the \emph{point robot} case: based on the sign of $\omega$, the robot turns left or right, and the left or right footprint points are chosen. 
For each triple of \emph{point robot} $c_c$, $c_m$, and $c_g$,   
for efficiency, we approximate all different $C_{obst}^{min}$ with $C_{obst}^{\overline{min}}$ (Fig. \ref{fig::c_obst^min} d). For a realistic optimal plan $p$ with \emph{actual} $c_c$ (current) and $c_g$ (goal) and multiple \emph{point robot} $c_c-c_m-c_g$ triples in between, we define the union of all $C_{obst}^{\overline{min}}$ as $\mathcal{C}_{obst}^{\overline{min}}$. 

In particular, we define an ``opposite'' function of $f(\cdot)$: 
$\mathcal{C}_{obst}^{\overline{min}} = o(p~|~c_c, c_g)$
as the \emph{minimal} hallucination function.\footnote{Note the inverse function $f^{-1}(\cdot)$ does not exist. Technically, $c_g$ can be uniquely determined by $p$ and $c_c$, but we include it as an input to $o(\cdot)$ for notational symmetry with $f(\cdot)$.} This function finds $\mathcal{C}_{obst}^{\overline{min}}$ based on the fact that  $p$ is an optimal plan. 
As visualized in Fig. \ref{fig::real_robot}, $C_{obst}$ is instantiated in terms of discrete LiDAR range readings. 
For each LiDAR beam, we define a minimum and a maximum range, which limit the possible range readings of this particular beam based on the optimal plan $p$. The minimum range of all beams is determined by the left and right boundary, as configurations within the boundary must be in $C_{free}$. We project all $C_{obst}^{\overline{min}}$ onto the corresponding LiDAR beams and for the beams directly intersect any $C_{obst}^{\overline{min}}$, the maximum range is set as the distance from the robot to the intersection point. For other beams, the maximum range is simply the LiDAR's physical limit, or manually pre-processed to a certain threshold. 

\begin{figure}
  \centering
  \includegraphics[width=1\columnwidth]{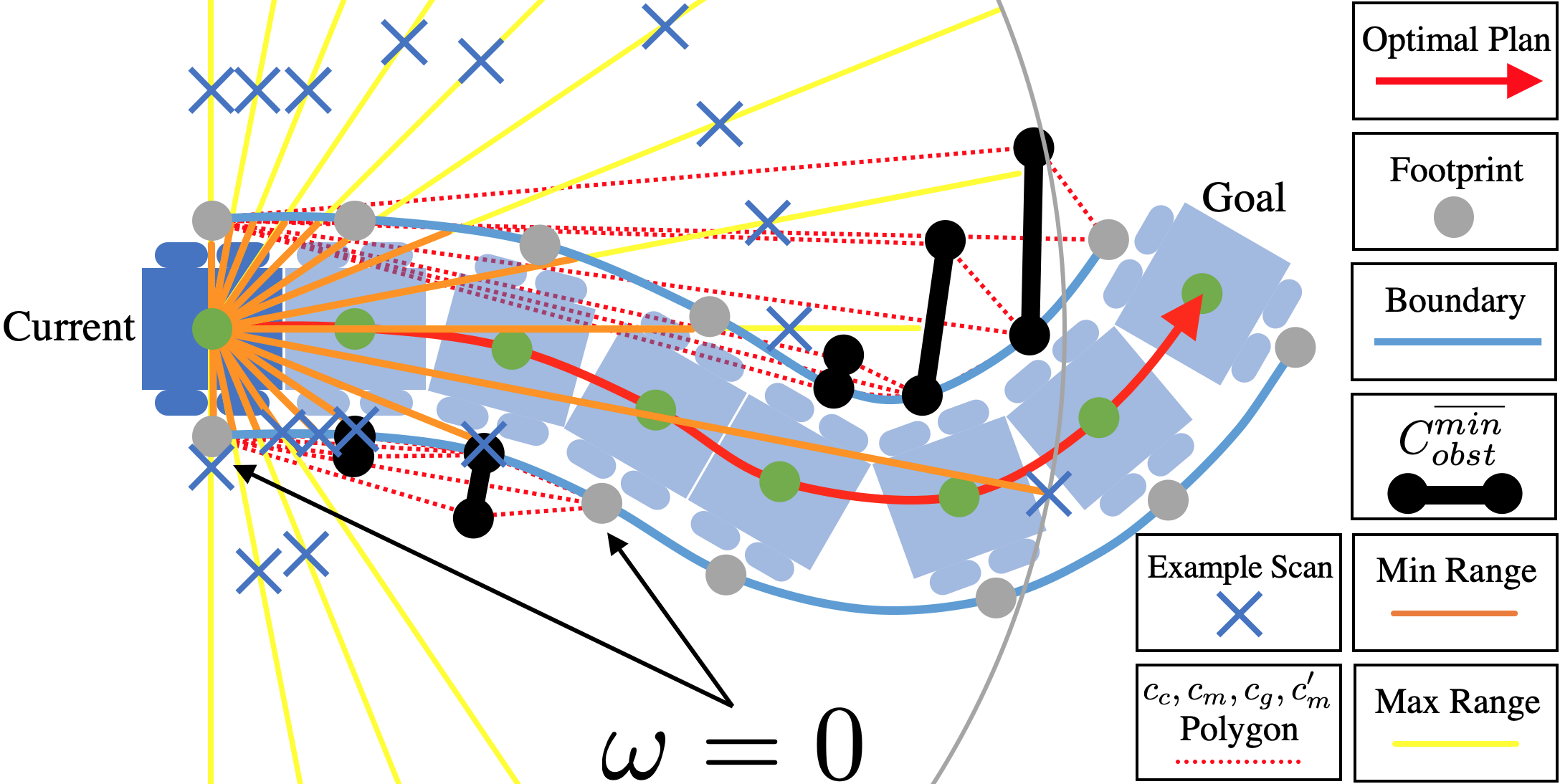}
  \caption{Applying the point robot example to realistic robot case: $C_{obst}^{min}$ is instantiated as LiDAR beams whose maximum range is determined by ray casting.}
  \label{fig::real_robot}
  \vspace{-10pt}
\end{figure}

\subsection{Sampling between Min and Max Range}
\label{sec::sampling}
Given the representative minimal unreachable set, we want to find all possible unreachable sets, or their sensor readings, that could lead to $p$ being the optimal plan.
Based on the LiDAR scan with a minimum and maximum range for each beam (end of Sec. \ref{sec::realistic_robot}), a sampling strategy is devised to create many obstacle sets, $C_{obst}$, in which $p$ is optimal. 

Our sampling strategy aims at creating different range readings that (1) resemble real-world obstacles, and (2) respect uncertainty/safety. For (1), most real-world obstacles have a certain footprint, and their surface contribute to continuity among neighboring beams. Starting from the first beam, a random range is sampled between min and max with a uniform distribution. Moving on to the neighboring beam, with a probability $\alpha$, we increase, or decrease, the previous range by a small random amount, and assign the value to the current beam. This practice is to simulate the continuity in neighboring beams. We make sure this value is within the min and max range of that beam. With probability $1-2\alpha$, we start from scratch and randomly sample between this beam's min and max values. This practice is to simulate the scenarios where the next beam misses the current obstacle, and reaches another one or does not reach any obstacle at all. For (2), we add an offset value as a function of the optimal plan $p$ to the ranges. For example, given a faster speed of $p$, a larger positive offset is added to the range reading (obstacles are farther away), because faster motion is correlated with more uncertainty or less safety (details can be found in Sec. \ref{sec::implementation}). One example scan sample is shown in Fig. \ref{fig::real_robot} as blue $\times$'s. 

The entire HLSD pipeline is described in Alg. \ref{alg::sober}. The inputs to the algorithm are a random exploration policy $\pi_{rand}$ in open space; the minimal hallucination function $o(\cdot)$; a \emph{sampling count} of hallucinated $C_{obst}$ to be generated per data point; the \emph{offset}$(\cdot)$ function; the probability $\alpha$; and a parameterized planner $f_{\theta}(\cdot)$. For every data point $(p, c_c, c_g)$ in $\mathcal{D}_{raw}$ collected using $\pi_{rand}$ in open space (line 2), we hallucinate $\mathcal{C}_{obst}^{\overline{min}}$ (line 5) and generate the min and max values for every LiDAR beam (line 6). Lines 8--15 correspond to the sampling technique to generate random laser scans. We instantiate $C_{obst}$ as LiDAR readings $L$ and add it to $\mathcal{D}_{train}$ (line 16). This process is repeated \emph{sampling count} times for every data in $\mathcal{D}_{raw}$. Finally, we train $f_{\theta}(\cdot)$ with supervised learning (line 19). This hallucinated learning enables sober deployment with perception of the real configuration $C_{obst}^{real}$ without runtime hallucination (Lines 21--22). 

\begin{algorithm}[b!]
 \caption{Hallucinated Learning and Sober Deployment}
 \begin{algorithmic}[1]
 \renewcommand{\algorithmicrequire}{\textbf{Input:}}
 \REQUIRE $\pi_{rand}$, $o(\cdot)$, \emph{sampling count}, \emph{offset}($\cdot$), $\alpha$, $f_{\theta}(\cdot)$
\\\hrulefill
  \STATE // \textbf{Hallucinated Learning}
  \STATE collect motion plans $(p, c_c, c_g)$ from $\pi_{rand}$ in free space and form raw data set $\mathcal{D}_{raw}$ 
  \STATE $\mathcal{D}_{train} \leftarrow \emptyset$
  \FOR {every $(p, c_c, c_g)$ in $\mathcal{D}_{raw}$}
  \STATE hallucinate $\mathcal{C}_{obst}^{\overline{min}} = o(p~|~c_c, c_g)$
  \STATE generate LiDAR range $L_{min}$ and $L_{max}$ with $\mathcal{C}_{obst}^{\overline{min}}$
    \FOR {iter = 1 : \emph{sampling count}}
    \STATE $L \leftarrow \emptyset$
    \STATE $l^1 \sim [l_{min}^1, l_{max}^1]$, $l^1 \leftarrow l^{1} + \text{\emph{offset}}(p)$
    \STATE add $l^1$ to $L$, $l^{last}\leftarrow l^1$
    \FOR{i = 2 : $|L|$}
        \STATE increase, decrease $l^{last}$ by a randomly selected small amount, or $l^i \sim [l_{min}^i, l_{max}^i]$, $l^i \leftarrow l^{i} + \text{\emph{offset}}(p)$, 
        with probability $\alpha$, $\alpha$, and $1-2\alpha$, respectively, and assign to $l^i$
        \STATE make sure $l^i \in [l_{min}^i, l_{max}^i]$
        \STATE add $l_i$ to $L$, $l^{last}\leftarrow l^i$
    \ENDFOR
    \STATE $C_{obst} \leftarrow L$, $\mathcal{D}_{train}=\mathcal{D}_{train}\cup(C_{obst}, p, c_c, c_g)$
    \ENDFOR
  \ENDFOR
  \STATE train $f_{\theta}(\cdot)$ with $\mathcal{D}_{train}$ by minimizing the error $\mathbb{E}_{(C_{obst}, p, c_c, c_g) \sim \mathcal{D}_{train}} \big[ \ell(p, f_\theta(C_{obst}~|~c_c, c_g))\big]$
\\\hrulefill
  \STATE // \textbf{Sober Deployment} (each time step)
  \STATE receive $C_{obst}^{real}, c_c, c_g$
  \STATE plan $p = \{u_i\}_{i=1}^{t} = f_{\theta}(C_{obst}^{real}~|~c_c, c_g)$
 \RETURN $p$
 \end{algorithmic}
 \label{alg::sober}
 \end{algorithm}

\section{EXPERIMENTS}
\label{sec::experiments}
Simulated and physical experiments are conducted to validate our hypothesis that HLSD can achieve better performance (faster, smoother, safer) than a classical method and LfH with runtime hallucination. In our experiments, we use a Clearpath Jackal robot, a four-wheeled, differential-drive, nonholonomic,  Unmanned Ground Vehicle (UGV), running the Robot Operating System (ROS) \texttt{move\textunderscore base} navigation stack. Its DWA local planner is replaced with HLSD. The robot navigates without a map. The global planner (Dijkstra's algorithm) assumes unknown regions are free and replans when obstacles are perceived. The local environment is known to the local planner. 

\subsection{Implementation}
\label{sec::implementation}
In order to instantiate Alg. \ref{alg::sober}, with \pmb{$o(\cdot)$} described in detail in Sec. \ref{sec::realistic_robot}, one still needs to define \pmb{$f_\theta(\cdot)$}, \pmb{$\pi_{rand}$}, \textbf{\emph{sampling count}}, \pmb{$\alpha$}, and \textbf{\emph{offset}}\pmb{$(\cdot)$}: 

\pmb{$f_\theta(\cdot)$}: For $p = \{u_i\}_{i=1}^{t} = f_{\theta}(C_{obst}^{real}~|~c_c, c_g)$, instead of requiring a high-resolution global path from the global planner (Dijkstra's) to construct $C_{obst}^*$ in LfH~\cite{xiao2021toward}, we only query a single local goal $c_g$ on the global path 1m away from the robot at each time step, and $c_c$ is the origin in the robot frame (orientation is ignored for simplicity). The UGV is equipped with a 720-dimensional 2D LiDAR with a 270$^\circ$ field of view, and we clip the maximum range to 1m to reduce the input space ($C_{obst}^{real}$). The planning horizon $t$ of $p$ is set to 1, i.e. only a single action $p = \{u_1\} = \{\left(v_1, \omega_1\right)\}$ (linear and angular velocity) is produced. We use the same three-layer neural network, with 256 hidden neurons and ReLU activation for each layer as in LfH~\cite{xiao2021toward}. 

\pmb{$\pi_{rand}$}: $\pi_{rand}$ randomly picks a target $\left(\hat{v}, \hat{\omega}\right)$ pair and commands the robot to reach that speed with constant increments/decrements considering acceleration limit. After reaching $\left(\hat{v}, \hat{\omega}\right)$, $\pi_{rand}$ keeps that command with some probability (0.9) or otherwise generates a new target command. We limit the output $v \in [0, 1.0]$m/s and $\omega \in [-1.57, 1.57]$rad/s. During training in a simulated open space, control inputs ($v$ and $\omega$) and robot configurations ($x$, $y$, and $\psi$) are recorded. Unlike the dataset collected by LfH~\cite{xiao2021toward}, which mostly contains $v=0.4$m/s, our dataset contains a variety of $v$ values. 12585 data points are collected in a 505s real-time simulation, including a variety of motions in an open space. 

\textbf{\emph{sampling count}}, \pmb{$\alpha$}, and \textbf{\emph{offset}}\pmb{$(\cdot)$}: We set \emph{sampling count} to 10 and $\alpha$ to 0.48. The \emph{offset}($\cdot$) function linearly maps current $v$ in the range [0.3, 1.0]m/s to an offset value between [0, 1]m. Considering the fact that a real robot also needs to turn even in open space due to nonholonomic constraints, as opposed to an ideal point mass robot which does not, we also hallucinate $C_{obst}=\emptyset$ for configurations where $v>0.8$m/s. Considering highly constrained spaces, we additionally hallucinate the most constrained $C_{obst}^*$ for configurations where $v<0.3$m/s. 
So $|\mathcal{D}_{train}|=12*|\mathcal{D}_{raw}|$. 
Training takes less than three minutes on a NVIDIA GeForce GTX 1650 laptop GPU. 

After computing $p$ with Alg. \ref{alg::sober}, we use the same Model Predictive Control model as in LfH \cite{xiao2021toward} to check for collisions. When a collision is detected, the robot enters a two-phase recovery behavior: it first queries the global path immediately in front of the robot, and rotates to align the robot heading with the tangential direction of the global path. If this recovery behavior is still not safe as determined by the collision checking, the robot backs up at $v=-0.2$m/s. Since LfH learns only from the \emph{most constrained} C-space, it requires runtime hallucination to match with training data, and a Turn in Place module to drive the robot out of out-of-distribution scenarios. Neither of those components are required by HLSD. 
Because our dataset contains varying $v \in [0, 1.0]$m/s, the LfH speed modulation module to adapt mostly constant $v$ to real environments is not necessary either. The robot is able to react to the real obstacle configuration by using $f_{\theta}(\cdot)$ alone because $\mathcal{D}_{train}$ covers a wide range of distributions. 

\subsection{Simulated Experiments}

\begin{figure*}
  \centering
  \includegraphics[width=2\columnwidth]{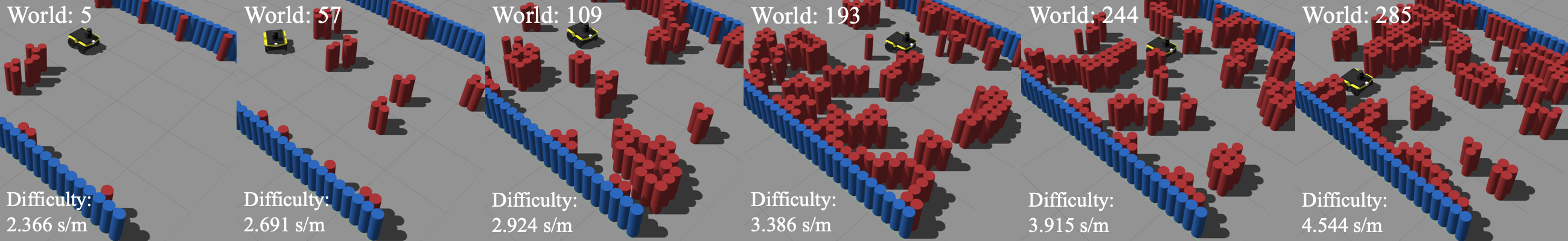}
  \vspace{-4pt}
  \caption{Simulated Environments of Different Difficulties in the BARN Dataset~\cite{perille2020benchmarking}}
  \label{fig::simulated_envs}
  \vspace{-18pt}
\end{figure*}

\begin{figure}
  \centering
  \includegraphics[width=1\columnwidth]{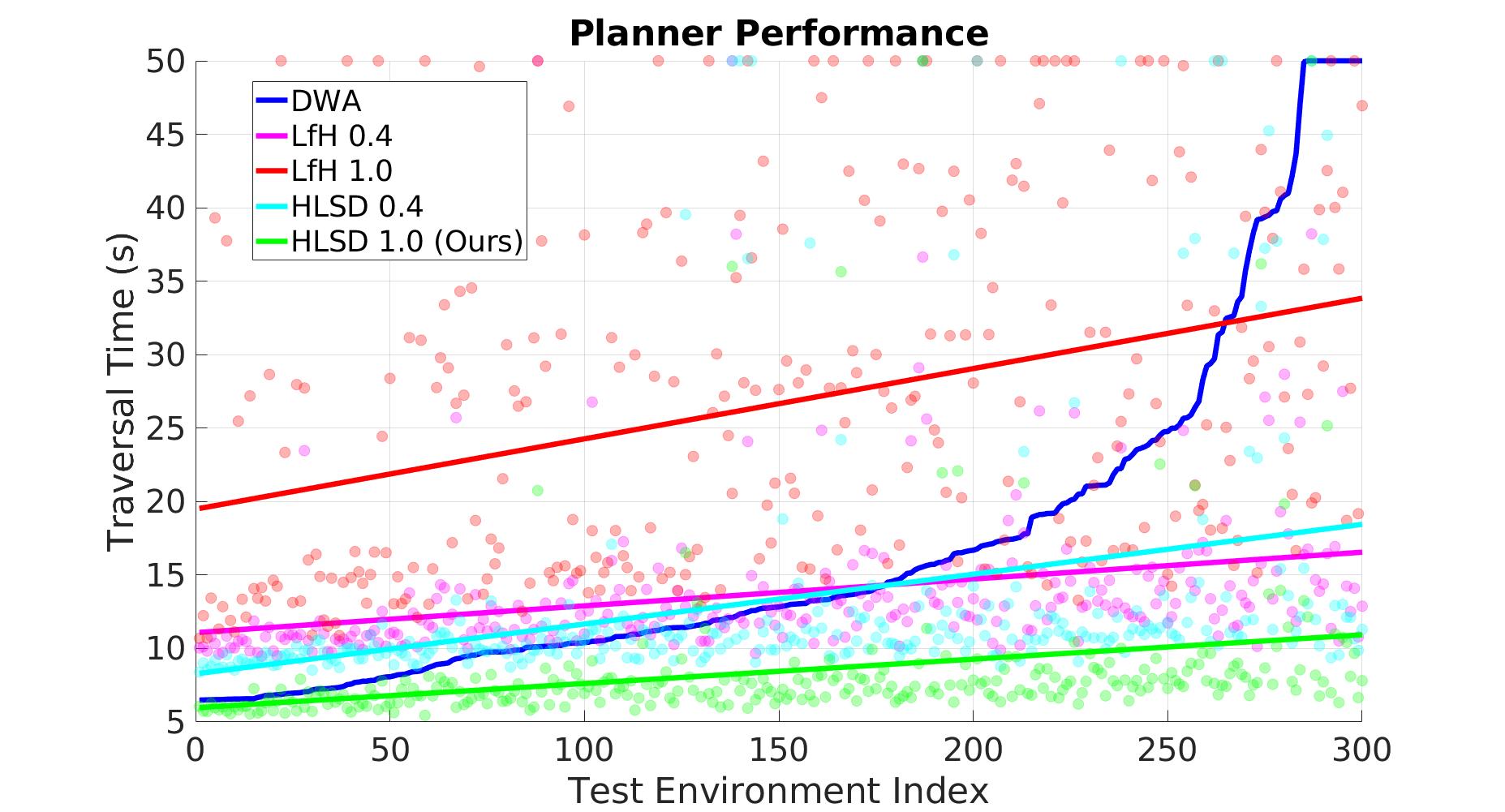}
  \vspace{-18pt}
  \caption{Simulation Results}
  \label{fig::simulation}
  \vspace{-15pt}
\end{figure}

We first use the Benchmark Autonomous Robot Navigation (BARN) dataset with 300 navigation environments and quantified difficulty levels~\cite{perille2020benchmarking} (example environments shown in Fig. \ref{fig::simulated_envs}) to compare (1) DWA, (2) original LfH trained on the mostly constant 0.4m/s dataset with hallucinated $C_{obst}^*$ (LfH 0.4), (3) LfH trained on our varying speed 1.0m/s dataset with $C_{obst}^*$ (LfH 1.0), (4) HLSD trained on the 0.4m/s dataset with augmentations to $C_{obst}^{\overline{min}}$ (HLSD 0.4), and (5) HLSD trained on the 1.0m/s dataset with $C_{obst}^{\overline{min}}$ (HLSD 1.0). 
The baselines are chosen for the following reasons: DWA is a widely used local planner and its implementation is available through the ROS \texttt{move\textunderscore base} navigation stack. We only consider similar self-supervised approaches and exclude methods that require expert supervision~\cite{pfeiffer2017perception, pokle2019deep}, because HLSD does not require any expert motion trajectories. 
Although DWA's default max linear velocity is 0.5m/s, for a fair comparison, we set DWA's max linear velocity to the same as HLSD's (1.0m/s).  We find that by also doubling DWA's default sampling rate for linear and angular velocity (12 and 40), the robot's performance is roughly the same as when using the default parameters (but at double the speed). 
Simulated results are shown in Fig. \ref{fig::simulation}.

In each of the 300 navigation environments generated by Cellular Automaton, the robot navigates between a specified start and goal location without a map. We record the traversal time for each trial, with a maximum of 50s. Three trials are conducted for each planner in each environment, resulting in a total of 4500 trials. The difficulty of the 300 environments are ordered from left to right based on the DWA performance (blue line). The performances of other planners are plotted as dots, for which a line is fit using linear regression. LfH 1.0 (red) fails many trials. The reason is that learning from a dataset with varying speed and always hallucinating the most constrained spaces causes ambiguity: the same most constrained C-space can map to different plans, which confuses the learner. LfH 0.4 (magenta) and HLSD 0.4 (cyan), both trained on the mostly constant 0.4m/s dataset without ambiguity, achieve similar performance and are less sensitive to navigational difficulty, especially LfH 0.4, but LfH 0.4 requires runtime hallucination and other components. Note that DWA has max speed of 1.0m/s, while LfH 0.4 and HLSD 0.4 have up to 0.6m/s max speed, modulated from mostly 0.4m/s learned from the dataset. DWA has an advantage in easy environments (left) due to its fast speed, but in difficult ones (right), LfH 0.4 and HLSD 0.4 are more stable. Also having 1.0m/s max speed, HLSD 1.0 considers the varying linear velocity in the 1.0m/s dataset with the \emph{offset}($\cdot$) function (line 9 in Alg. \ref{alg::sober}), and achieves the best result, outperforming all alternatives across the entire range of difficulties. 
The means and standard deviations of all five planners calculated from all trials are shown in Tab. \ref{tab::simulated_physical_results}. LfH 1.0 has the largest average time and variance, because the learner is confused by the varying training labels in the same most constrained spaces. DWA has large time and also high variance due to its sampling nature. Again, HLSD 0.4 performs similarly as LfH 0.4 does. 
HLSD 1.0 still outperforms all other planners.

\begin{table}
\centering
\caption{Simulated and Physical Traversal Time in Seconds}
\vspace{-5pt}
\begin{tabular}{cccccc}
\toprule
& \scriptsize{DWA} & \scriptsize{LfH 0.4} & \scriptsize{LfH 1.0} & \scriptsize{HLSD 0.4} & \textbf{\scriptsize HLSD 1.0} \\ 
\midrule
\scriptsize{Sim.} & \scriptsize{17.0$\pm$12.6} & \scriptsize{13.5$\pm$6.4}  & \scriptsize{26.7$\pm$15.0} & \scriptsize{13.4$\pm$9.8} & \textbf{\scriptsize 8.5$\pm$6.3}\\
\scriptsize{Phy.} & \scriptsize{78.8$\pm$10.0} & \scriptsize{67.0$\pm$4.4}  & \scriptsize{$\infty$} & \scriptsize{66.3$\pm$0.7} & \textbf{\scriptsize45.4$\pm$0.4}\\
\bottomrule
\end{tabular}
\vspace{-5pt}
\label{tab::simulated_physical_results}
\end{table}


\subsection{Physical Experiments}
We also deploy the same set of local planners in a physical test course, five trials each (Fig. \ref{fig::physical} top). The results are shown in Tab. \ref{tab::simulated_physical_results}. The complicated obstacle course causes DWA to execute many recovery behaviors, and the robot takes a long average time with large variance to finish the traversal. LfH 0.4 and HLSD 0.4 are both able to successfully navigate the robot through, with similar traversal times. But note that LfH 0.4 requires a fine-resolution global path for hallucination during deployment and the Turn in Place module, while HLSD does not and only reacts to the real obstacles without any other extra components. In this physical obstacle course, LfH 1.0 fails every trial due to multiple collisions. Our HLSD 1.0 achieves the best performance both in terms of average time and standard deviation. HLSD deployment in a natural cluttered environment is shown in Fig. \ref{fig::physical} bottom. 

\begin{figure}
  \centering
  \includegraphics[width=1\columnwidth]{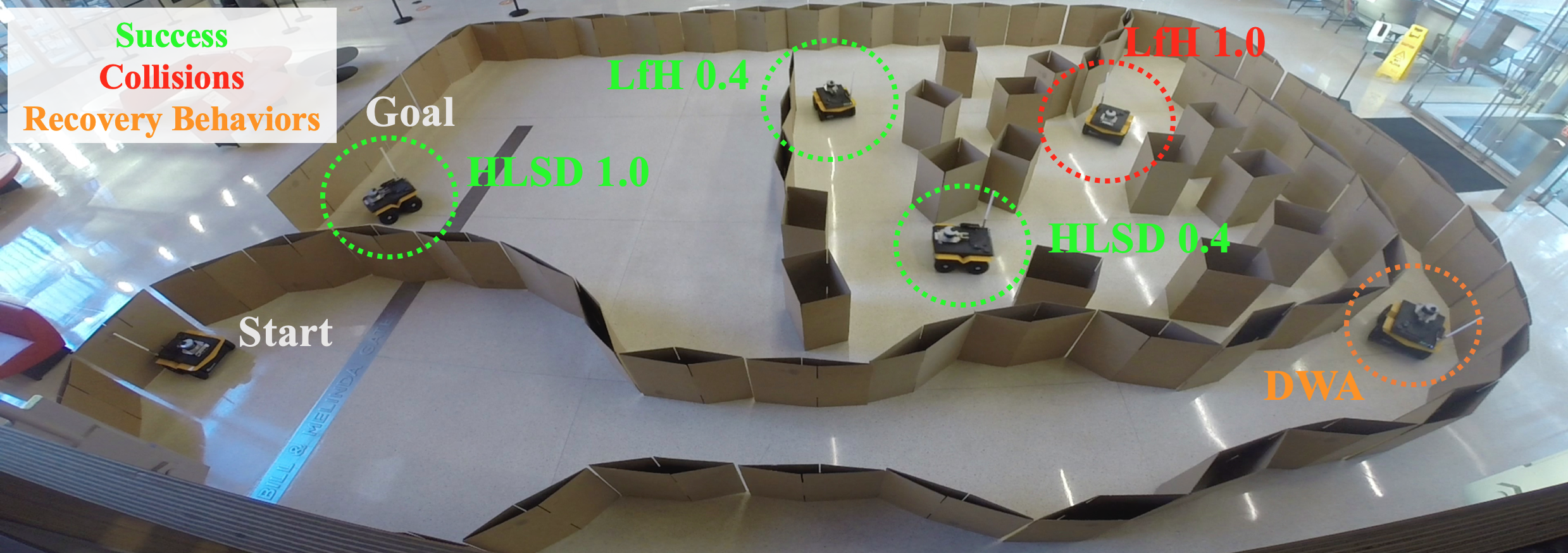}
  \includegraphics[width=1\columnwidth]{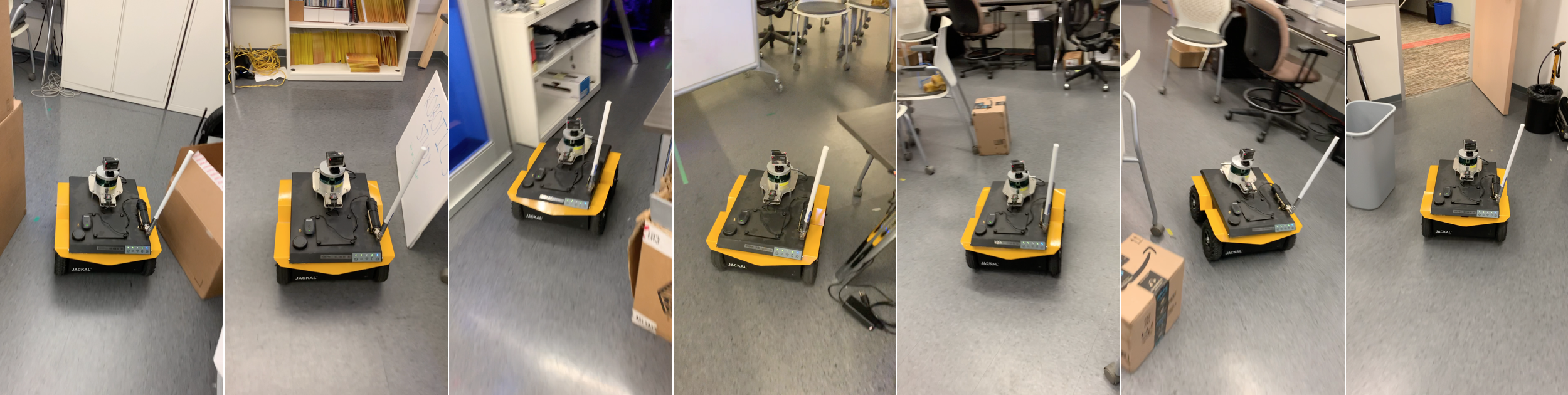}
  \vspace{-15pt}
  \caption{Physical Experiments (\url{https://www.youtube.com/watch?v=LZcBN9zgtXg&t=11s})}
  \label{fig::physical}
  \vspace{-10pt}
\end{figure}


\section{CONCLUSIONS}
\label{sec::conclusions}
We introduce HLSD, a self-supervised machine learning technique for mobile robot navigation with safety guarantee during training. Similar to LfH~\cite{xiao2021toward}, the robot safely learns in a completely obstacle-free environment and its perception is hallucinated with obstacle configurations where the actions taken in the free space remain optimal. Instead of overfitting to the most constrained spaces during training and requiring runtime hallucination and other modules to adapt to actual environments, the new HLSD pipeline allows the robot to navigate with the learned planner alone in response to realistic perception. By leveraging a large body of carefully designed hallucinations used for training, the learned planner does not need to deal with many out-of-distribution scenarios and has its own capability to address uncertainty/safety in the real-world during deployment. 
\bibliographystyle{IEEEtran}
\bibliography{IEEEabrv,references}

\end{document}